\documentclass[10pt, a4paper, logo, twocolumn, nonumbering]{deepmind}

\usepackage[utf8]{inputenc}
\usepackage[T1]{fontenc}
\usepackage[draft]{hyperref}
\usepackage{amsfonts}
\usepackage{amssymb}
\usepackage{amsmath}
\usepackage{amsthm}
\usepackage{nicefrac}
\usepackage{microtype}
\usepackage{graphicx}
\usepackage{algorithm, algorithmic}
\usepackage{mathpazo}
\usepackage{xcolor}
\usepackage[authoryear, round]{natbib} 

\def\w{\mathbf{w}}
\def\x{\mathbf{x}}

\def\g{\gamma}
\def\A{\mathbf{A}}
\def\b{\mathbf{b}}
\def\I{\mathbf{I}}
\def\X{\mathbf{X}}
\def\P{\mathbf{P}}

\def\D{\mathbf{D}}

\def\tr{^\top}
\def\E#1{\mathbb{E}\left[#1\right]}
\newtheorem{proposition}{Proposition}

\title{When to use parametric models in reinforcement learning?}

\author{Hado van Hasselt, Matteo Hessel, John Aslanides\\DeepMind, London, UK}

\begin{abstract}
We examine the question of when and how parametric models are most useful in reinforcement learning.  In particular, we look at commonalities and differences between parametric models and experience replay.  Replay-based learning algorithms share important traits with model-based approaches, including the ability to \textit{plan}: to use more computation without additional data to improve predictions and behaviour. We discuss when to expect benefits from either approach, and interpret prior work in this context. We hypothesise that, under suitable conditions, replay-based algorithms should be competitive to or better than model-based algorithms if the model is used only to generate fictional transitions from observed states for an update rule that is otherwise model-free. We validated this hypothesis on Atari 2600 video games. The replay-based algorithm attained state-of-the-art data efficiency, improving over prior results with parametric models.
\end{abstract}

\begin{document}
\maketitle

The general setting we consider is learning to make decisions from finite interactions with an environment.  Although the distinction is not fully unambiguous, there exist two prototypical classes of algorithms: those that learn without an explicit model of the environment (\emph{model free}), and those that first learn a model and then use it to plan a solution (\emph{model based}).

There are good reasons for building the capability to learn some sort of model of the world into artificial agents.  Models may allow transfer of knowledge in ways that policies and scalar value predictions do not, and may allow agents to acquire rich knowledge about the world before knowing how this knowledge is best used. In addition, models can be used to \emph{plan}: to use additional computation, without requiring additional experience, to improve the agent's predictions and decisions.

% What is the paper about?
In this paper, we discuss commonalities and differences between \emph{parametric models} and \emph{experience replay} \citep{Lin:1992}. Although replay-based agents are not always thought of as model-based, replay shares many characteristics that we often associate with parametric models.  In particular, we can `plan' with the experience stored in the replay memory in the sense that we can use additional computation to improve the agent's predictions and policies in between interactions with the real environment.

Our work was partially inspired by recent work by \citet{Kaiser:2019}, who showed that planning with a parametric model allows for data-efficient learning on several Atari video games. A main comparison was to Rainbow DQN \citep{Hessel:2018}, which uses replay. We explain why their results may perhaps be considered surprising, and show that in a like-for-like comparison Rainbow DQN outperformed the scores of the model-based agent, with less experience and computation.

We discuss this in the context of a broad discussion of parametric models and experience replay. We examine equivalences between them, potential failure modes of planning with parametric models, and how to exploit parametric models in addition to, or instead of, using them to provide imagined experiences to an otherwise model-free algorithm.

\def\env{\mathcal{E}}
\def\batch{\mathcal{B}}
\def\s{\tilde{s}}
\def\a{\tilde{a}}
\def\r{\tilde{r}}
\begin{algorithm}[t]
\begin{algorithmic}[1]
\STATE Input: state sample procedure $d$
\STATE Input: model $m$
\STATE Input: policy $\pi$
\STATE Input: predictions $v$
\STATE Input: environment $\env$
\STATE Get initial state $s \gets \env$
\FOR{iteration $ \in \{1, 2, \ldots, K\}$}
\FOR{interaction $ \in \{1, 2, \ldots, M\}$}
\STATE Generate action: $a \gets \pi(s)$
\STATE Generate reward, next state: $r, s' \gets \env(a)$
\STATE $m, d \gets$ {\sc UpdateModel}($s, a, r, s'$) \label{model_update}
\STATE $\pi, v \gets$ {\sc UpdateAgent}($s, a, r, s'$) \label{real_policy_update}
\STATE Update current state: $s \gets s'$
\ENDFOR
\FOR{planning step $ \in \{1, 2, \ldots, P\}$}
\STATE Generate state, action $\s, \a \gets d$ \label{sample_state}
\STATE Generate reward, next state: $\r, \s' \gets m(\s, \a)$ \label{sample_model}
\STATE $\pi, v \gets$ {\sc UpdateAgent}($\s, \a, \r, \s'$) \label{model_policy_update}
\ENDFOR
\ENDFOR
\end{algorithmic}
\caption{\label{alg:mbrl} Model-based reinforcement learning}
\end{algorithm}

\section{Model-based reinforcement learning}

We now define the terminology that we use in the paper, and present a generic algorithm that encompasses both model-based and replay-based algorithms.

% What is the setting?
We consider the reinforcement learning setting \citep{SuttonBarto:2018} in which an \emph{agent} interacts with an \emph{environment}, in the sense that the agent outputs actions and then obtains observations and rewards from the environment. We consider the \emph{control} setting, in which the goal is to optimise the accumulation of the rewards over time by picking appropriate sequences of actions. The action an agent outputs typically depends on its \emph{state}. This state is a function of past observations; in some cases it is sufficient to just use the immediate observation as state, in other cases a more sophisticated agent state is required to yield suitable decisions.

% What is planning?
We use the word \emph{planning} to refer to any algorithm that uses additional computation to improve its predictions or behaviour without consuming additional data.  Conversely, we reserve the term \emph{learning} for updates that depend on newly observed experience.

% What is a model?
The term \emph{model} will refer to functions that take a state and action as input, and that output a reward and next state. Sometimes we may have a perfect model, as in board games (e.g., chess and go); sometimes the model needs to be learnt before it can be used. Models can be stochastic, to approximate inherently stochastic transition dynamics, or to model the agent's uncertainty about the future. Expectation models are deterministic, and output (an approximation of) the expected reward and state.  If the true dynamics are stochastic, iterating expectation models multiple steps may be unhelpful, as an expected state may itself not be a valid state; the output of a model may not have useful semantics when using an expected state as input rather than a real state \citep[cf.][]{Wan:2019}.  Planning is associated with models, because a common way to use computation to improve predictions and policies is to search using a model. For instance, in Dyna \citep{Sutton:1990}, learning and planning are combined by using new experience to learn both the model and the agent's predictions, and then planning to further improve the predictions.

% What is replay?
\textit{Experience replay} \citep{Lin:1992} refers to storing previously observed transitions to replay later for additional updates to the predictions and policy. Replay may also be used for planning and, when queried at state-action pairs we have observed, experience replay may be indistinguishable from an accurate deterministic model. Sometimes, there may be no practical differences between replay and models, depending on how they are used. On the other hand, a replay memory is less flexible than a model, since we cannot query it at arbitrary states that are not present in the replay memory.

\subsection{A generic algorithm}

% Describe the algorithm
Algorithm \ref{alg:mbrl} is a generic model-based learning algorithm. It runs for $K$ iterations, in each of which $M$ interactions with the environment occur. The total number of interactions is thus $T \equiv K \times M$. The experience is used to update a model (line \ref{model_update}) and the policy or predictions of the agent (line \ref{real_policy_update}).  Then, $P$ steps of `planning' are performed, where transitions sampled from the model are used to update the agent (line \ref{model_policy_update}).  For $P = 0$, the model is not used, hence the algorithm is model-free (we could then also skip line \ref{model_update}). If $P > 0$, and the agent update in line \ref{real_policy_update} does not do anything, we have a purely model-based algorithm.  The agent updates in lines \ref{real_policy_update} and \ref{model_policy_update} could differ, or they could treat real and modelled transitions equivalently.

% Other algorithms are special cases of this
Many known algorithms from the model-based literature are instances of algorithm \ref{alg:mbrl}. If lines \ref{real_policy_update} and \ref{model_policy_update} both update the agent's predictions in the same way, the resulting algorithm is known as Dyna \citep{Sutton:1990} -- for instance, if predictions $v$ include action values (normally denoted with $q$) and we update using Q-learning \citep{Watkins:1989,WatkinsDayan:1992}, we obtain Dyna-Q \citep{SuttonBarto:2018}. One can extend Algorithm \ref{alg:mbrl} further, for instance by allowing planning and model-free learning to happen simultaneously. Such extensions are orthogonal to our discussion and we do not discuss them further.

Some algorithms typically thought of as being model-free also fit into this framework. For instance, DQN \citep{Mnih:2013,Mnih:2015} and neural-fitted Q-iteration \citep{Riedmiller:2005} match Algorithm \ref{alg:mbrl}, if we stretch the definitions of `model' to include the more limited replay buffers. DQN learns from transitions sampled from a replay buffer by using Q-learning with neural networks.
In Algorithm \ref{alg:mbrl}, this corresponds to updating a non-parametric model, in line \ref{model_update}, by storing observed transitions in the buffer (perhaps overwriting old transitions); line \ref{sample_model} then retrieves a transition from this buffer. The policy is only updated with transitions sampled from the replay buffer (i.e., line \ref{real_policy_update} has no effect).

\section{Model properties}\label{sec:model_properties}

\begin{figure*}[t]
    \centering
    \includegraphics[width=0.248\linewidth]{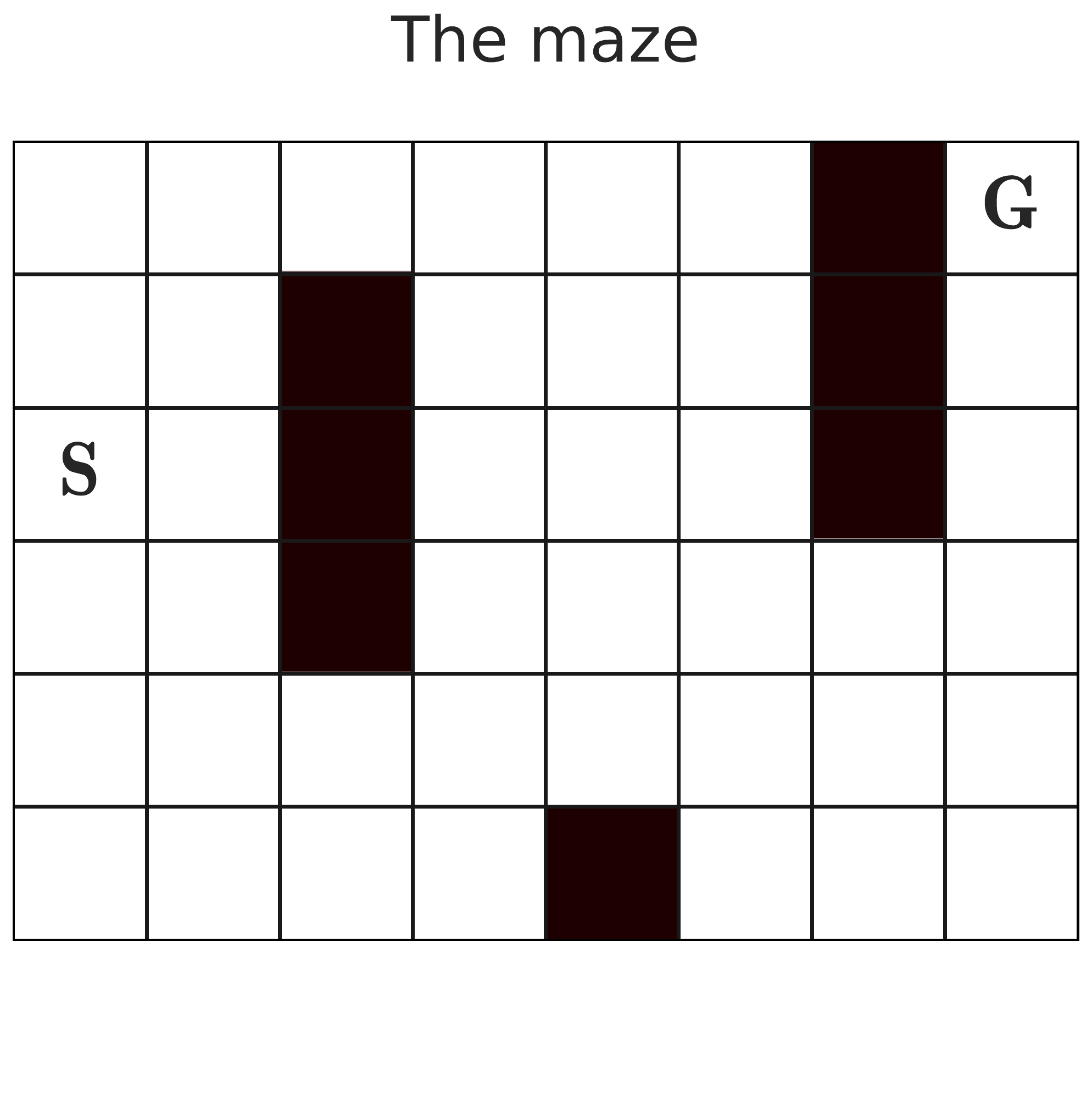}\hspace{.7 cm}
    \includegraphics[width=0.25\linewidth]{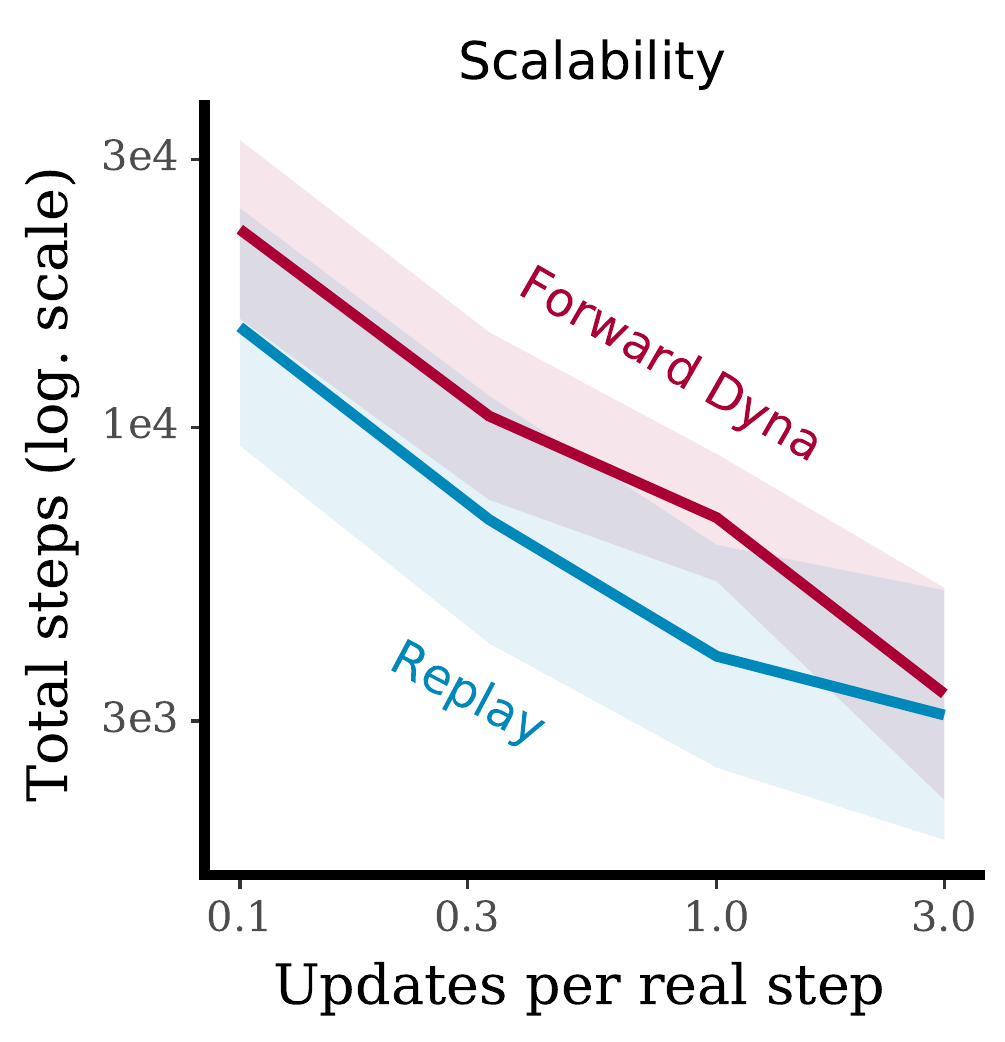}
    \caption{\textbf{Left}: the layout of the grid world \citep{SuttonBarto:2018}, `S' and `G' denote the start and goal state, respectively. \textbf{Right}: Q-learning with replay (blue) or Dyna-Q with a parametric model (red); $y$-axis: the total number of steps to complete 25 episodes of experience, $x$-axis: the number of updates per step in the environment. Both axes are on a logarithmic scale.}
    \label{dyna_maze}
\end{figure*}

% Performance scales with more planning = more computation
A main advantage of using models is the ability to \emph{plan}: to use additional computation, but no new data, to improve the agent's policy or predictions. \citet{SuttonBarto:2018} illustrate the benefits of planning in a simple grid world (Figure \ref{dyna_maze}, on the left), where the agent must learn to navigate along the shortest path to a fixed goal location. On the right of Figure \ref{dyna_maze} we use this domain to show how the performance of a replay-based Q-learning agent (blue) and that of a Dyna-Q agent (red) scale similarly with the amount of planning (measured in terms of the number of updates per real environment step). Both agents use a multi-layer perceptron to approximate action values, but Dyna-Q also used identical networks to model transitions, terminations and rewards. The algorithm is called `forward Dyna' in the figure, because it samples states from the replay and then steps forward one step using the model. Later we will consider a variant that, instead, steps backward with an inverse model. The appendix contains further details on the experiments.

\subsection{Computational properties}

% Rolling a parametric model forward usually takes more computation than sampling from replay
There are clear computational differences between using parametric models and replay. For instance, \citet{Kaiser:2019} use a fairly large deep neural network to model the pixel dynamics in Atari, which means predicting a single transition can require non-trivial computation. In general, parametric models typically require more computations than it takes to sample from a replay buffer.

% Parametric models are usually more memory efficient due to compression/generalization
On the other hand, in a replay model the capacity and memory requirements are tightly coupled: each transition that is stored takes up a certain amount of memory. If we do not remove any transitions, the memory can grow unbounded. If we limit the memory usage, then this implies that the effective capacity of the replay is limited as any transitions we replace are forgotten completely.  In contrast, parametric models may be able to achieve good accuracy with a fixed and comparatively small memory footprint.

\subsection{Equivalences}
Suppose we manage to learn a model that perfectly matches the transitions observed thus far.  If we would then use such a perfect model to generate experiences only from states that were actually observed, the resulting updates would be indistinguishable from doing experience replay. In that sense, replay matches a perfect model, albeit only from the states we have observed.\footnote{One could go one step further and extend replay to be a full non-parametric model as well. For instance \citet{Pan:2018} use kernel methods to define what happens when we query the replay-based model at states that are not stored in the buffer.} Therefore, all else being equal, we would expect that using an imperfect (e.g., parametric) model to generate fictional experiences from truly observed states should probably not result in better learning.

There are some subtleties to this argument.  First, the argument can be made even stronger in some cases. When making linear predictions with least-squares temporal-difference learning \citep[LSTD, ][]{Bradtke:96,Boyan:1999}, the model-free algorithm on the original data does not require (or indeed benefit from) planning: the solution will already be a best fit (in a least squares sense) even with a single pass through the data. In fact, if we fit a linear model to the data and then fully solve this model, the solution turns out to be equal to the LSTD solution \citep{Parr:2008}. One can also show that exhaustive replay with linear TD($\lambda$) \citep{Sutton:1988} is equivalent to a one-time pass through the data with LSTD($\lambda$) \citep{vanSeijen:2015}, because replay similarly allows us to solve the empirical `model' that is implicitly defined by the observed data.

These equivalences are however limited to linear prediction, and do not extend straightforwardly to non-linear functions, or to the control setting.  This leaves open the question of when to use a parametric model rather than replay, or vice versa.

\subsection{When do parametric models help learning?}

\begin{figure*}[t]
    \centering
    \includegraphics[width=0.20\linewidth]{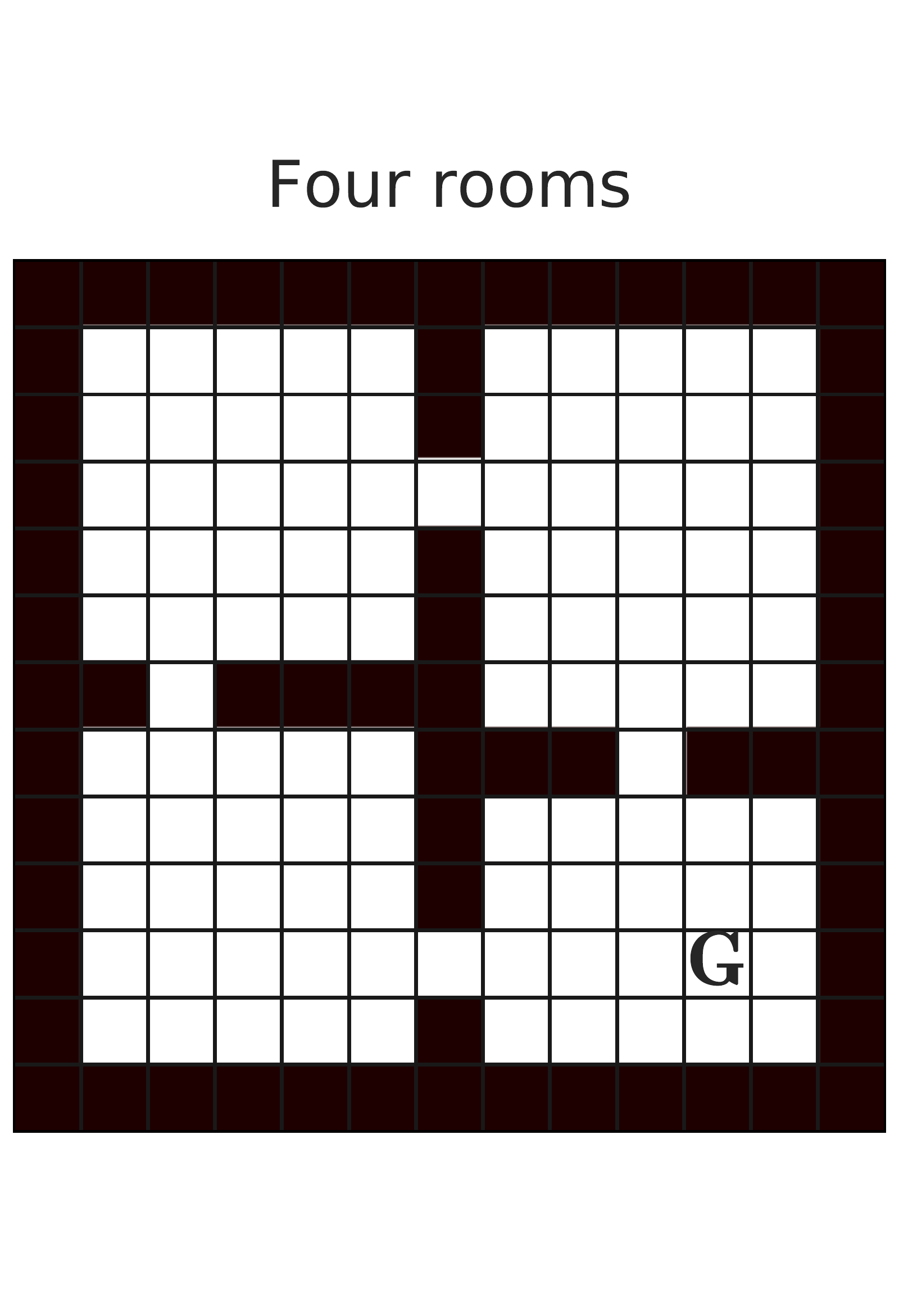}
    \includegraphics[width=0.25\linewidth]{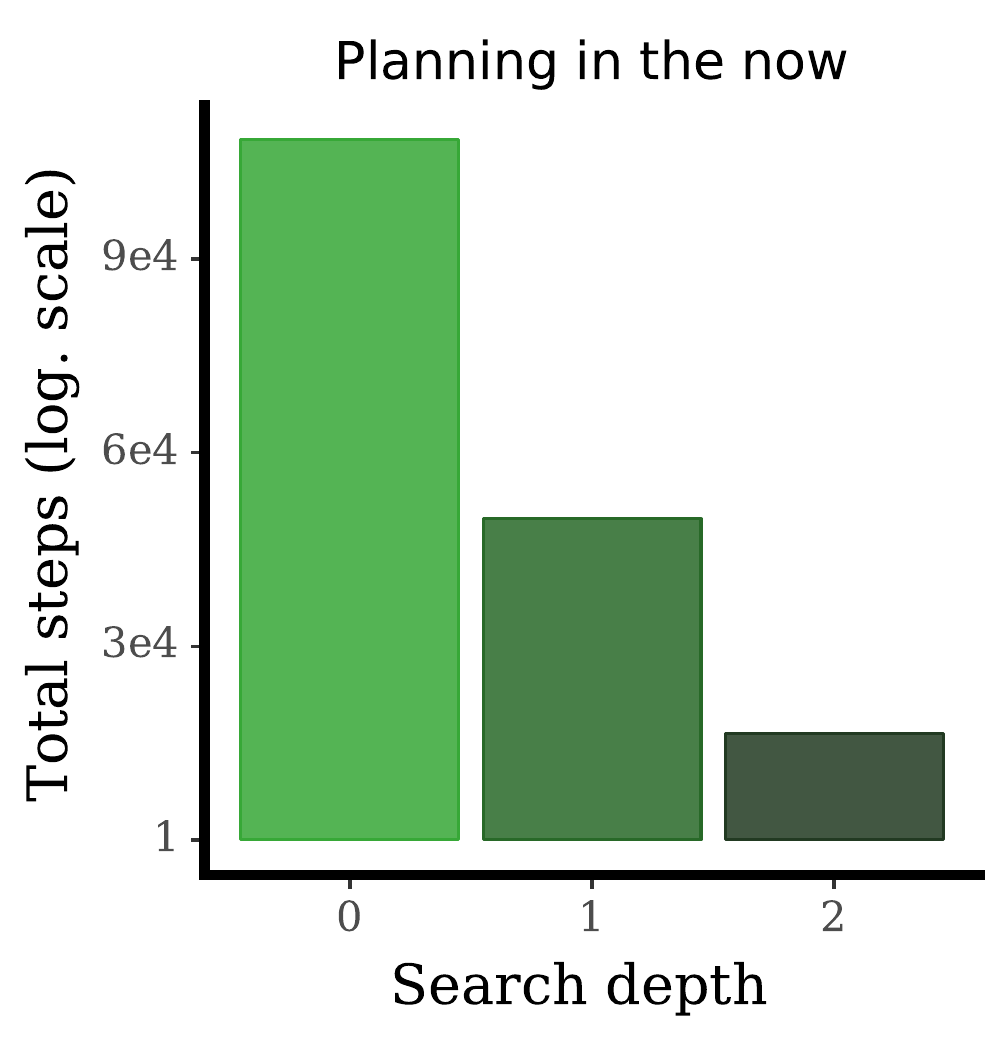}
    \includegraphics[width=0.25\linewidth]{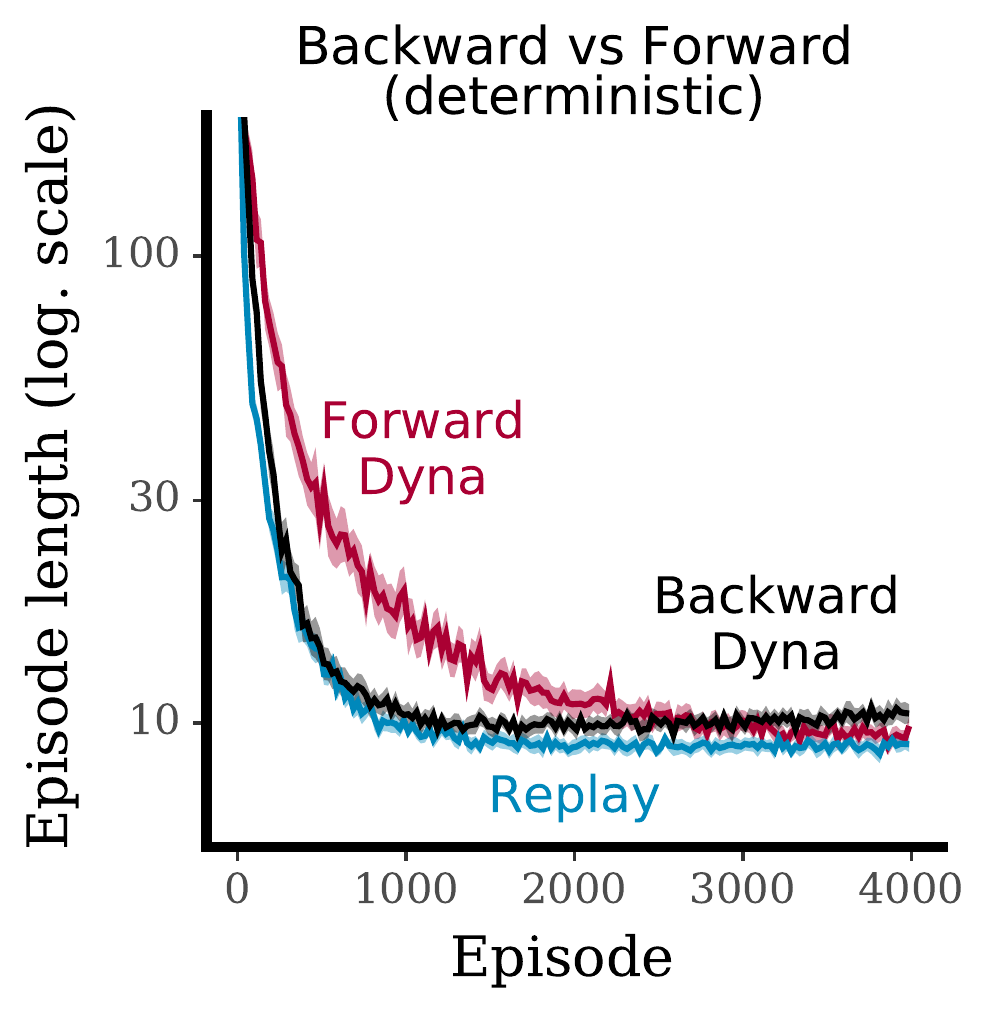}
    \includegraphics[width=0.25\linewidth]{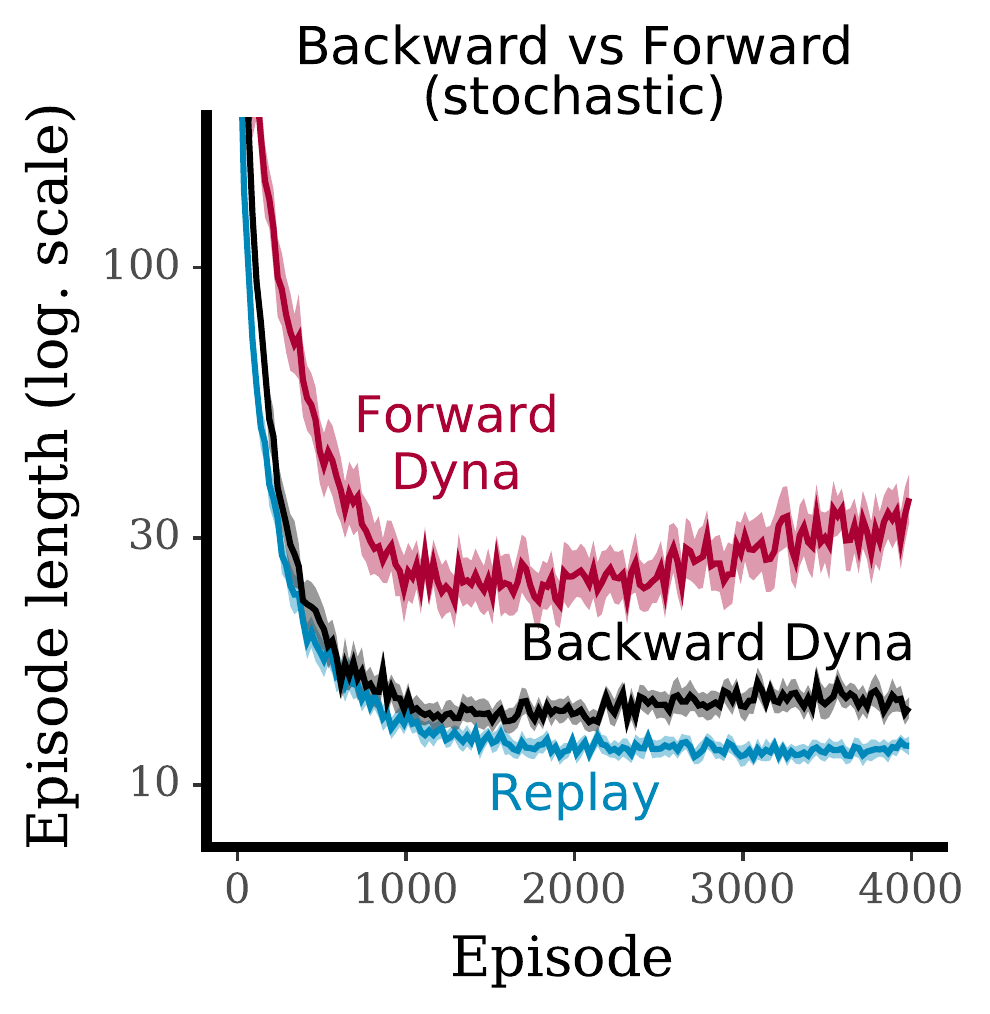}
    \caption{\textbf{Left}: four rooms grid world \citep{Sutton:1998option}. \textbf{Center-left}: planning forward from the current state to update the current behaviour (0 steps corresponds to Q-learning); $y$-axis: total number of steps required to complete 100 episodes, $x$-axis: search depth. \textbf{Center-right}: comparing replay (blue), forward Dyna (red), and backward Dyna (black); $y$-axis: episode length (logarithmic scale), $x$-axis: number of episodes. \textbf{Right}: adding stochasticity to the transition dynamics (in the form of a $20\%$ probability of transitioning to a random adjacent cell irrespectively of the action), then comparing again replay (blue), forward Dyna (red), and backward Dyna (black); $y$-axis: episode length (logarithmic scale), $x$-axis: number of episodes}
    \label{beneficial}
\end{figure*}

When should we expect benefits from learning and using a parametric model, rather than using the actual data?  We discussed important computational differences above.  Here we focus on learning efficiency: when do parametric models help learning?

% Counterfactual transitions
First, parametric models may be useful to plan into the future to help determine our policy of behaviour. The ability to generalise to unseen or counter-factual transitions can be used to plan from the \emph{current state} into the future (sometimes called planning `in the now' \citep{Kaelbling:2010}), even if this exact state has never before been observed. This is commonly and successfully employed in model-predictive control \citep{Richalet:1978,Morari:1999,Mayne:2014,Wagener:2019}.  Classically, the model is constructed by hand rather than learnt directly from experience, but the principle of planning forward to find suitable behaviour is the same.
It is not possible to replicate this with standard replay, because in interesting rich domains the current state will typically not exactly appear in the replay. Even if it would, replay does not allow easy generation of \emph{possible} next states, in addition to the one trajectory that actually happened.

% Exploration / optimism
If we use a model to select actions, rather than trusting its imagined transitions to update the policy or predictions, it may be less essential to have a highly accurate model. For instance, the model may predict a shortcut that does not actually exist; using this to then steer behaviour results in experience that is both suitable to correct the error in the model, and that yields the kind of directed, temporally consistent behaviour typically sought for exploration purposes \citep{lowrey2018plan}. 

We illustrate this with a simple experiment on a classic four room grid-world with deterministic transitions \citep{Sutton:1998option}. We learnt a tabular forward model that generates transitions $(s, a)\rightarrow (s', r, \gamma)$, where $s$ and $s'$ are states, $a$ is an action, $r$ is a reward, and $\gamma \in [0, 1]$ is a discount factor. We then used this model to plan via a simple breadth-first search up to a fixed depth, bootstrapping from a value function $Q(s, a)$ learnt via standard Q-learning. We then use the resulting planned values of the actions at the current state to behave. This process can be interpreted as using a multi-step greedy policy \citep{Efroni:2018} to determine behaviour, instead of the more standard one-step greedy policy. The results are illustrated in the second-to-left plot in Figure \ref{beneficial}: more planning was beneficial.

% Backward planning
In addition to planning forward to improve behaviour, models may be useful for credit assignment through \emph{backward} planning.  Consider an algorithm where, as before, we sample real visited states from a replay buffer, but instead of planning one step into the future from these states we plan one step backward. One motivation for this is that if the model is poor then planning a step forward will update the real sampled state with a misleading imagined transition.  This will potentially cause harmful updates to the value at these real states.  Conversely, if we plan backwards we then update an imagined state. If the model is poor this imagined state perhaps does not resemble any real state. Updating such fictional states seems less harmful. 
When the model becomes very accurate, forward and backward planning both start to be equally useful.  For a purely data-driven (partial) model, such as a replay buffer, there is no meaningful distinction. But with a learnt model that is at times inaccurate, backward planning may be less error-prone than forward planning for credit assignment.

We illustrate potential benefits of backward planning with a simple experiment on the four-room environment. In the two right-most plots of Figure \ref{beneficial}, we compare the performance of applying tabular Q-learning to transitions generated by a forward model (red), a backward model (black), or replay (blue). The forward model learns distributions over states, rewards, and terminations $\mathrm{Pr}(r, \gamma, s' \mid s, a)$ and the backward model learns the opposite: $\mathrm{Pr}(s, a \mid r, \gamma, s')$. Both use a Dirichlet(1) prior. We evaluated the algorithms in the deterministic four-room environment, as well as in a stochastic variant where on each step there is a $20\%$ probability of transitioning to a random adjacent cell irrespectively of the action. In both cases, using backward planning resulted in faster learning than forward planning. In the deterministic setting, the forward model catches up later in learning, reaching the same performance of replay after 2000 episodes; instead, planning with a backward model is competitive with replay in early learning but performs slightly worse later in training. We conjecture that the slower convergence in later stages of training may be due to the fact that predicting the source state and action in a transition is a non-stationary problem (as it depends on the agent's policy), and given that early episodes include many more transitions than later ones, it can take many episodes for a Bayesian model to forget the policies observed early in training. The lack of convergence to the optimal policy for the forward planning algorithm in the stochastic setting may instead be due to the independent sampling of the successor state and reward, which may result in inconsistent transitions. Both these issues may be addressed by a suitable choice of the model. More detailed investigations are out of scope for this paper, but it is good to recognise that such modelling choices have measurable effects on learning.

\section{A failure to learn}\label{sec:failure}

We now describe how planning in a Dyna-style learning algorithm can, perhaps surprisingly easily, lead to catastrophic learning updates.

Algorithms that combine function approximation (e.g., neural networks), bootstrapping (as in temporal difference methods \citep{Sutton:1988}), and off-policy learning \citep{SuttonBarto:2018,Precup:2000} can be unstable \citep{Williams:1993,Baird:1995,Sutton:1995,Tsitsiklis:1997,Sutton:2009,Sutton:2016} --- this is sometimes called the \emph{deadly triad} \citep{SuttonBarto:2018,vanHasselt:2018}.

This has implications for Dyna-style learning, as well as for replay methods \citep[cf.][]{vanHasselt:2018}. When using replay it is sometimes relatively straightforward to determine how off-policy the state sampling distribution is, and the sampled transitions will always be real transitions under that distribution (assuming the transition dynamics are stationary).  In contrast, the projected states given by a parametric model may differ from the states that would occur under the real dynamics, due to modelling error. The update rule will then be solving a predictive question for the MDP induced by the model, but with a state distribution that does not match the on-policy distribution in that MDP.

To understand this issue better, consider using Algorithm \ref{alg:mbrl} to estimate expected cumulative discounted rewards $v_\pi(s) = \E{ R_{t+1} + \g R_{t+2} + \ldots \mid S_t = s, \pi}$ for a policy $\pi$ by updating $v_\w(s) \approx v_\pi(s)$ with temporal difference (TD) learning \citep{Sutton:1988}:
\begin{align}
\label{td}
\w & \gets \w + \alpha \delta_t \nabla_{\w} v_\w(S_t) \,,\\
\text{with }  \delta_t &\equiv R_{t+1} + \g_{t+1} v_\w(S_{t+1}) - v_\w(S_t) \,,\notag
\end{align}
where $R_{t+1} \in \mathbb{R}$ and $\g_{t+1} \in [0,1]$ are the reward and discount on the transition from $S_t$ to $S_{t+1}$, and $\alpha > 0$ is a small step size.
Consider linear predictions $v_\w(S_t) = \w\tr\x_t \approx v_{\pi}(S_t)$, where $\x_t \equiv \x(S_t)$ is a feature vector for state $S_t$. The expected TD update is then $\w \gets (\I - \alpha \A)\w + \alpha \b$, with $\b = \E{ R_{t+1} \x_t }$ and $\A = \E{ \x_t\x_t\tr - \g\x_t\x_{t+1} } = \X\tr\D (\I - \g \P\tr) \X$, where the expectation is over the transition dynamics and over the sampling distribution $d$ of the states. The transition dynamics can be written as a matrix $\P$, with $[\P]_{ij} = p(S_{t+1} = i \mid S_t = j,  \pi)$, that maps the vector of all states into the vector of all subsequent states: the  is the probability of transitioning from state $j$ to $i$ under policy $\pi$. The diagonal matrix $\D$ contains the probabilities $[\D]_{ii} = d(i) = P(S_t = i \mid \pi)$ of sampling each state $s$ on its diagonal. The matrix $\X$ contains the feature vectors $\x(s)$ of all states on its rows, and maps between state and feature space. Both $\P$ and $\D$ are linear operators in state space, not feature space.

These updates are guaranteed to be stable (i.e., converge) if $\A = \X\tr\D (\I - \g \P\tr) \X$ is positive semi-definite \citep{Sutton:2016}, with a spectral radius smaller than $1/\alpha$. The deadly triad occurs when $\D$ and $\P$ do not match, as then $\A$ can become negative definite and at which point the spectral radius $\rho(\I - \alpha \A)$ can become larger than one, and the weights can diverge.  This can happen when $\D$ does not correspond to the steady-state distribution of the policy that conditions $\P$ --- that is, if we update off-policy.

\begin{proposition}
Consider uniformly replaying transitions from a buffer containing either full episodes (e.g., add new full episodes on termination, potentially remove an old full episode), and using these transitions in the TD algorithm defined by update \eqref{td}. This algorithm is stable.
\end{proposition}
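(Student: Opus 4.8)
The plan is to reduce everything to the positive semi-definiteness criterion already recalled in the text: the expected replay-TD iteration $\w \gets (\I - \alpha\A)\w + \alpha\b$ cannot diverge once $\A \succeq 0$, since for a fixed buffer $\A$ is a finite symmetric matrix, so $\rho(\A)$ is bounded and the condition $\rho(\I - \alpha\A)\le 1$ holds for the small step size $\alpha$ already assumed. So it suffices to show that the $\A$ matrix induced by \emph{uniform replay from a buffer that is a union of complete episodes} is positive semi-definite. First I would write this $\A$ out. Allowing a per-transition discount $\g_{t+1}\in[0,1]$ that equals $0$ exactly on transitions into a terminal state, the same derivation as in the text gives $\A = \X\tr\D(\I - \tilde\P\tr)\X$, where now $\D = \mathrm{diag}(d)$ with $d(s)$ the fraction of buffered transitions whose \emph{source} state is $s$, and $\tilde\P$ is the discount-weighted transition matrix, $[\tilde\P]_{s's} = \mathbb{E}[\g_{t+1}\mid S_t=s,\,S_{t+1}=s']\,\hat p(s'\mid s)$, with $\hat p$ the empirical transition frequencies in the buffer. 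The one structural fact to record here is that $\tilde\P$ is \emph{sub-stochastic}: each column is nonnegative and sums to $\mathbb{E}[\g_{t+1}\mid S_t=s]\le 1$, strictly so for source states from which episodes can terminate.

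The crux is a flow-conservation identity, and this is exactly where the ``full episodes'' hypothesis is used. Within one complete episode $S_0,\dots,S_L$ (with $S_L$ terminal), each non-terminal state occurs as a successor $S_{t+1}$ exactly as many times as it occurs as a source $S_t$, except the initial state, which occurs once more as a source. Summing over all episodes in the buffer and weighting by discounts, the total inflow into any non-terminal state is at most (source count) $-$ (initial-state count); normalising by the buffer size this says, in matrix form, $\tilde\P d \le d$ componentwise (in fact $\tilde\P d = d - c\,\nu_0$ for some $c>0$, with $\nu_0$ the empirical initial-state distribution). This is precisely the ``on-policy'' consistency between the sampling distribution $\D$ and the dynamics $\tilde\P$ whose absence the text identifies as the source of the deadly triad; it would fail for truncated episodes, because then a buffered successor need not reappear as a source. (Equivalently: replaying whole episodes amounts to sampling from the stationary distribution of the Markov chain that runs $\pi$ and restarts from the initial distribution upon termination.)

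With $\tilde\P d \le d$ in hand I would run the standard quadratic-form argument in the $d$-weighted geometry. For arbitrary $\w$, set $v \equiv \X\w$ and write $\langle a,b\rangle_d \equiv a\tr\D b$ and $\|a\|_d^2 \equiv \langle a,a\rangle_d$; then $\w\tr\A\w = \|v\|_d^2 - \langle v, \tilde\P\tr v\rangle_d$. Cauchy--Schwarz gives $\langle v, \tilde\P\tr v\rangle_d \le \|v\|_d\,\|\tilde\P\tr v\|_d$, and Cauchy--Schwarz applied within each column of $\tilde\P$ (legitimate since the columns are nonnegative with sum $\le 1$) combined with $\tilde\P d \le d$ yields $\|\tilde\P\tr v\|_d^2 \le \sum_{s'} v(s')^2 (\tilde\P d)(s') \le \|v\|_d^2$. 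Hence $\w\tr\A\w \ge 0$ for every $\w$, so $\A \succeq 0$ and, by the criterion above, the algorithm is stable. (When discounting is strict one in fact gets $\w\tr\A\w > 0$ on the range of $\X$, i.e.\ a genuine contraction in that subspace.)

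I expect the flow-conservation step to be the only non-routine part: one must track episode termination and the implicit restart carefully so that no successor mass goes unaccounted for, and it is precisely completeness of the episodes that guarantees this --- which is why the glib ``treat the replay stream as one stationary ergodic chain'' argument needs care, since episodic trajectories are finite. A secondary, minor point worth a sentence is that the buffer changes over time as whole episodes are inserted and evicted; the argument shows $\A \succeq 0$ for \emph{every} buffer that is a union of complete episodes, so no admissible buffer induces a divergent expected update, and boundedness of $\rho(\A)$ keeps the step-size requirement innocuous.
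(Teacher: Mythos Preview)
Your proposal is correct and follows the same underlying idea as the paper's proof: the empirical state distribution $\tilde\D$ induced by uniform sampling from a buffer of complete episodes is consistent with the empirical dynamics $\tilde\P$, which makes $\A$ positive semi-definite and hence the expected TD update non-expansive. The paper's own proof is a two-sentence sketch that simply asserts this consistency and invokes the PSD criterion, whereas you spell out the flow-conservation identity $\tilde\P d \le d$ (and identify precisely where the full-episodes hypothesis enters) and carry through the Tsitsiklis--Van~Roy style $\|\tilde\P^\top v\|_d \le \|v\|_d$ bound; this is more detail than the paper provides, not a different route.
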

\begin{proof}
The replay buffer defines an empirical model, where the induced policy is the empirical distribution of actions: $\tilde{\pi}(a|s) = n(s,a)/n(s)$, where $n(s)$ and $n(s, a)$ are the number of times $s$ and the pair $(s, a)$ show up in the replay (the behaviour policy can change while filling the replay, the resulting empirical policy is then a sample of a mixture of these policies). The empirical transitions $[\tilde{\P}]_{ij} = n(i, j)/n(i)$ and state distributions $[\tilde{\D}]_{ii} = n(s)/N$, where $N$ is the total size of the replay buffer, then both correspond to the same empirical policy.  Therefore, $\rho(\tilde{\X}\tr\tilde{\D} (\I - \g \tilde{\P}\tr) \tilde{\X}) > 0$, and TD will be stable and will not diverge.
\end{proof}
This proposition can be extended to the case where transitions are added to the replay one at the time, rather then in full episodes. If, however, we sample states according to a non-uniform distribution (e.g., using prioritised replay) this can make replay-based algorithms less stable and potentially divergent \citep[cf.][]{vanHasselt:2018}.

We now show that a very similar algorithm that uses models in place of replay can diverge.
\begin{proposition}
Consider uniformly replaying states from a replay buffer, then generating transitions with a learnt model $\hat{p}_m$, and using these transitions in a TD update \eqref{td}. This algorithm can diverge.
\end{proposition}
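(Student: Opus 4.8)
The plan is to prove the statement by exhibiting a counterexample. Recall from the stability analysis above that the expected update of \eqref{td} on model-generated transitions has the form $\w \gets (\I - \alpha \A_m)\w + \alpha \b_m$ with $\A_m = \X\tr \D (\I - \g \P_m\tr) \X$, where now $\D$ is the diagonal matrix of the uniform-over-states replay distribution and $\P_m$ is the transition matrix of the learnt model $\hat{p}_m$ rather than of the true dynamics. This iteration diverges for every sufficiently small $\alpha > 0$ as soon as $\A_m$ has an eigenvalue with negative real part, since then $\rho(\I - \alpha \A_m) > 1$. So it suffices to build an MDP, a replay buffer, and a learnt model for which $\A_m$ fails to be positive semi-definite. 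The conceptual point to stress is that, unlike in Proposition~1, replacing real replayed transitions by model-generated ones decouples the state-sampling distribution $\D$ from the transition operator $\P_m$: there is no longer any guarantee that $\D$ is even approximately a stationary distribution of $\P_m$, because the learnt model may differ from the empirical dynamics of the buffer, so the mismatch that drives the deadly triad reappears.

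For the concrete construction I would adapt the classical two-state off-policy divergence example \citep{Tsitsiklis:1997,SuttonBarto:2018}. Take two states $s_1, s_2$, a single scalar weight which in this one-dimensional case I write as $w$, features $\x(s_1) = 1$ and $\x(s_2) = 2$, zero rewards throughout, and discount $\g > 1/2$; the true values are all zero. Arrange the environment, or the exploration policy, so that the replay buffer is dominated by $s_1$ --- for instance $s_1$ could be (almost) absorbing under the behaviour --- so that $\D \approx \mathrm{diag}(1,0)$, while the learnt model $\hat{p}_m$, being imperfect, predicts the transition $s_1 \to s_2$. On each planning step we then draw $\s = s_1$ from the buffer, the model returns $\r = 0$ and $\s' = s_2$, and \eqref{td} becomes $w \gets w + \alpha(0 + 2\g w - w) = (1 + \alpha(2\g - 1))\,w$. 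Since $2\g - 1 > 0$, the multiplier exceeds $1$ and $w$, hence the value estimates, diverges to infinity. Equivalently, a direct computation gives $\A_m = 1\cdot(1 - 2\g) < 0$, so $\A_m$ is not positive semi-definite and the stability condition fails.

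This is robust to perturbation: $\A_m$ depends continuously on both $\D$ and $\P_m$, so it remains negative for an open set of replay distributions (it is enough that $s_1$ carries sufficiently more weight than $s_2$) and of model parameters (the model need only assign probability close enough to $1$ to $s_1 \to s_2$), showing that divergence here is not a knife-edge phenomenon. I expect the main thing to get right is not the algebra but the modelling narrative around it: one should make the offending pair (buffer, model) clearly reachable by a plausible learning process --- for example the model is trained only on transitions emanating from $s_1$ and merely extrapolates badly, or the dynamics are non-stationary so that the model lags the data --- and one should be careful that the update used is the genuine semi-gradient rule \eqref{td}, since the residual-gradient variant would not diverge in this example. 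It is also worth contrasting explicitly with Proposition~1: there, replaying \emph{full transitions} forces $\D$ and $\P$ to be induced by one common empirical policy, which is precisely the coupling that the present counterexample destroys. One could alternatively invoke Baird's star counterexample \citep{Baird:1995} in the same way --- a buffer with near-uniform state coverage together with a model encoding the deterministic target-policy dynamics --- but the two-state example makes the mechanism most transparent.
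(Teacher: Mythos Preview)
Your proposal is correct and takes essentially the same approach as the paper: the paper's in-text proof is a brief argument that the mismatch between the replay sampling distribution $\tilde{\D}$ and the learnt dynamics $\hat{\P}_m$ can make $\hat{\A} = \X\tr\tilde{\D}(\I - \g\hat{\P}_m\tr)\X$ negative definite, and the concrete instantiation (given in the appendix) is exactly the two-state MRP you describe, with scalar features $x(s)=s$, zero rewards, and $\gamma$ close to one. Your explicit update computation and the robustness remark are slightly more detailed than what the paper spells out, but the construction and the mechanism are the same.
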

\begin{proof}
The learnt dynamics $\hat{\P}_m \approx \P$ do not necessarily match the empirical dynamics of the replay, which means that the empirical replay distribution $d$, used in the updates, does not necessarily correspond to the steady-state distribution of these dynamics.  Then the model error could lead to a negative definite $\hat{\A} \equiv \X\tr\tilde{\D} (\I - \g \hat{\P}_m\tr) \X$, resulting in a spectral radius $\rho(I - \alpha\hat{\A}) > 1$, and divergence of the parameters $\w$.
\end{proof}

Intuitively, the issue is that the model $m$ can lead to states that are uncommon, or impossible, under the sampling distribution $d$.  Those states are not sampled to be updated directly, but do change through generalisation when sampled states are updated.  This can lead to divergent learning dynamics.

There are ways to mitigate the failure described above. 
First, we could repeatedly iterate the model, and sample transitions \emph{from} the states the model generates as well as \emph{to} those states, to induce a state distribution that is consistent with the model. This is not fully satisfactory, as states typically become ever-more unrealistic when iterating a learnt model. Such fictional updates can also be harmful to learning.
Second, we could rely less on bootstrapping by using multi-step returns \citep{Sutton:1988,vanHasselt:2015,SuttonBarto:2018}. This mitigates the instability \citep[cf.][]{vanHasselt:2018}. In the extreme, full Monte-Carlo updates do not diverge, though they would be high variance.
Third, we could employ algorithms specifically for stable off-policy learning, although these are often specific to the linear setting \citep{Sutton:2008,Sutton:2009,vanHasselt:2014} or assume the sampling is done on trajectory \citep{Sutton:2016}. Note that several algorithms exist that correct the \emph{return} towards a desired policy \citep{Harutyunyan:2016,Munos:2016}, which is a separate issue from off-policy sampling of \emph{states}. Off-policy learning algorithms may be part of the long-term answer, we do not yet have a definitive solution.  To quote \citet{SuttonBarto:2018}:
\textit{
    The potential for off-policy learning remains tantalising, the best way to achieve it still a mystery.
}

Understanding such failures to learn is important to understand and improve our algorithms.  However, just because divergence \emph{can} occur does not mean it \emph{does} occur \citep[cf.][]{vanHasselt:2018}.  Indeed, in the next section we compare a replay-based algorithm to a model-based algorithm which was stable enough to achieve impressive sample-efficiency on the Atari benchmark.

\section{Model-based algorithms at scale}
We now discuss two algorithms in more detail:  first SimPLe \citep{Kaiser:2019}, which uses a parametric model, then Rainbow DQN \citep{Hessel:2018}, which uses experience replay (and was used as baseline by \citeauthor{Kaiser:2019}).

\paragraph{SimPLe}
\citet{Kaiser:2019} showed data-efficient learning is possible in Atari 2600 videos games from the arcade learning environment \citep{Bellemare:2013} with a purely model-based approach: only updating the policy with data sampled from a learnt parametric model $m$.  The resulting ``simulated policy learning'' (SimPLe) algorithm performed relatively well after just 102,400 interactions (409,600 frames --- two hours of simulated play) within each game.  In Algorithm \ref{alg:mbrl}, this corresponds to setting $K \times M = 16 \times 6400 =$ 102,400. Although SimPLe used limited data, it used a large number of samples from the model, similar to using $P =$ 800,000.\footnote{The actual number of reported model samples was $19 \times 800,000 = 15.2$ million, because $P$ was varied depending on the iteration.}

\paragraph{Rainbow DQN}
One of the main results by \citet{Kaiser:2019} was to compare SimPLe to Rainbow DQN \citep{Hessel:2018}, which combines the DQN algorithm \citep{Mnih:2013,Mnih:2015} with double Q-learning \citep{vanHasselt:2010,vanHasselt:2016}, dueling network architectures \citep{Wang:2016}, prioritised experience replay \citep{Schaul:2016}, noisy networks for exploration \citep{Fortunato:2017}, and distributional reinforcement learning \citep{Bellemare:2017}.
Like DQN, Rainbow DQN uses mini-batches of transitions sampled from experience replay \citep{Lin:1992} and uses Q-learning \citep{Watkins:1989} to learn the action-value estimates which determine the policy. Rainbow DQN uses multi-step returns \citep[cf.][]{Sutton:1988, SuttonBarto:2018} rather than the one-step return used in the original DQN algorithm.

\subsection{A data efficient Rainbow DQN}
In the notation of Algorithm \ref{alg:mbrl}, the total number of transitions sampled from replay during learning will be $K \times P$, while the total number of interactions with the environment will be $K \times M$. Originally, in both DQN and Rainbow DQN, a batch of 32 transitions was sampled every 4 real interactions. So $M=4$ and $P=32$.  The total number of interactions was 50M (200 million frames), which means $K = 50\mathrm{M}/4 = 12.5\mathrm{M}$.

In our experiments below, we trained Rainbow DQN for a total number of real interactions comparable to that of SimPLe, by setting $K=$ 100,000, $M=1$ and $P = 32$. The total number of replayed samples (3.2 million) is then less than the total number of model samples used in SimPLe (15.2 million). Rainbow DQN is more efficient computation-wise, since sampling from a replay buffer is faster than generating a transition with a learnt model.

The only other changes we made to make Rainbow DQN more data efficient were to increase the number of steps in the multi-step returns from 3 to 20, and to reduce the number of steps before we start sampling from replay from $20,000$ to $1600$. We used the fairly standard convolutional Q network from \citet{Hessel:2018IBDRL}.  We have not tried to exhaustively tune the algorithm and we do not doubt that the algorithm can be made even more data efficient by fine-tuning its hyper-parameters.

\subsection{Empirical results}
\begin{figure}
    \centering
    \includegraphics[width=0.9\linewidth]{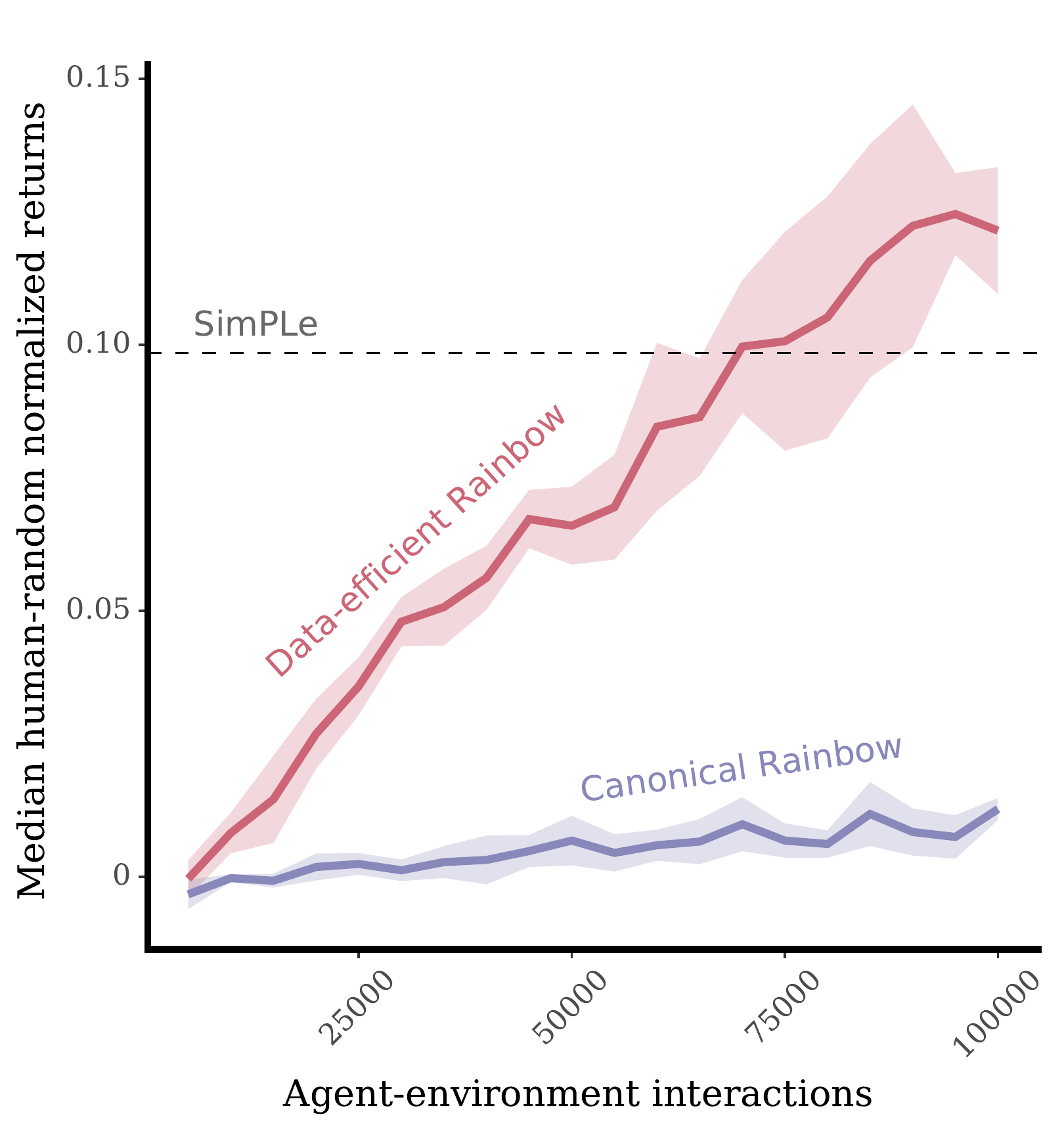}
    \caption{Median human-normalised episode returns of a tuned Rainbow, as a function of environment interactions ($=$frames/action repeats). The horizontal dashed line corresponds to the performance of SimPLe \citep{Kaiser:2019}. Error bars are computed over 5 seeds.}
    \label{aggregate}
\end{figure}

We ran Rainbow DQN on the same 26 Atari games reported by \citet{Kaiser:2019}.
In Figure \ref{aggregate}, we plotted the performance of our version of Rainbow DQN as a function of the number of interactions with the environment. Performance was measured in terms of episode returns, normalised using human and random scores \citep{vanHasselt:2016}, and then aggregated across the 26 games by taking their median. Error bars are shown as computed over the 5 independent replicas of each experiment. The final performance of SimPLe, according to the same metric, is shown in Figure \ref{aggregate} as a dashed horizontal line.

As expected, the hyper-parameters proposed by \citet{Hessel:2018} for the larger-data regime of 50 million interactions are not well suited to a regime of extreme data-efficiency (purple line in Figure \ref{aggregate}). Performance was better for our slightly-tweaked data-efficient version of Rainbow DQN (red line), that matched the performance of SimPLe after just 70,000 interactions with the environment, reaching roughly $25\%$ higher performance by 100,000 interactions. The performance of our agent was superior to that of SimPLe in 17 out of 26 games. More detailed results are included in the appendix, including ablations and per-game performance.

\section{Conclusions}
We discussed commonalities and differences between replay and model-based methods.  In particular, we discussed how model errors may cause issues when we use a parametric model in a replay-like setting, where we sample observed states from the past. We note that model-based learning can be unstable in theory, and hypothesised that replay is likely a better strategy under that state sampling distribution.  This is confirmed by at-scale experiments on Atari 2600 video games, where our replay-based agent attained state-of-the-art data efficiency, besting the impressive model-based results by \citet{Kaiser:2019}.

We further hypothesised that parametric models are perhaps more useful when used either 1) to plan backward for credit assignment, or 2) to plan forward for behaviour.  Planning forward for credit assignment was hypothesised and shown to be less effective, even though the approach seems quite common. The intuitive reasoning was that when the model is inaccurate, then planning backwards with a learnt model may lead to updating fictional states, which seems less harmful than updating real states with inaccurate transitions as would happen in forward planning for credit assignment.  Forward planning for \emph{behaviour}, rather than credit assignment, was deemed potentially useful and less likely to be harmful for learning, because the resulting plan is not trusted as real experience by the prediction or policy updates. Empirical results supported these conclusions.

There is a rich literature on model-based reinforcement learning, and this paper cannot cover all the potential ways to plan with learnt models.  One notable topic that is out of scope for this paper is the consideration of abstract models \citep{Silver:2017} and alternative ways to use these models in addition to classic planning \citep[cf.][]{Weber:2017}.

Finally, we note that our discussion focused mostly on the distinction between parametric models and replay, because these are the most common, but it is good to acknowledge that one can also consider \emph{non-parametric} models.  For instance, one could apply a nearest-neighbours or kernel approach to a replay buffer, and thereby obtain a non-parametric model that can be equivalent to replay when sampled at the observed states, but that can interpolate and generalise to unseen states when sampled at other states \citep{Pan:2018}.  This is conceptually an appealing alternative, although it comes with practical algorithmic questions of how best to define distance metrics in high-dimensional state spaces. This seems another interesting potential avenue for more future work.

\subsubsection*{Acknowledgments}
The authors benefitted greatly from feedback from Tom Schaul, Adam White, Brian Tanner, Richard Sutton, Theophane Weber, Arthur Guez, and Lars Buesing.

\small

\bibliography{references}
\bibliographystyle{abbrvnat}

\normalsize

\newpage
\onecolumn
\centerline{\huge{\textbf{Appendix}}}
\appendix
\section{Divergence example}
As a concrete illustration of the issue discussed in Section 3, consider the two-state Markov reward process (MRP) depicted in Figure \ref{mrp}a. This example is similar in nature to other examples from the literature [Baird, 1995, Tsitsiklis and Van Roy, 1997]. On each transition with probability $p$ we transition to state $s=1$ (left), and with probability of $1 - p$ transition to $s=2$ (right). All rewards are 0, discount is $\gamma = 0.99$. Each state has a single feature $x(s) = s$. The goal is to learn a weight $w$ such that $v_{\w}(s) = w \times x(s)$ is accurate. The optimal weight is, trivially, $w = 0$.  

As discussed, the expected update can diverge if the sampling distribution of states $d$ does not match the sampling distribution under model $m$.
Figure \ref{mrp}b shows under which sampling probabilities $d(s) = P(S_t = s)$ and transition probabilities $P(S_{t+1}=1 | S_t)$ the updates diverge.
Divergence occurs when the probability of sampling state $s=1$ (under $d$) is sufficiently higher than the transition probability into state $s=1$. Note how oversampling state $s=2$ is less harmful for this specific choice of function approximation.

Updates do not diverge because the learnt model is inaccurate, but because of a mismatch between the model dynamics and the state sampling distribution. Divergence can thus occur even when using the true dynamics, if $d$ does not match the steady-state distribution induced by such dynamics. For a true dynamics of $p(S_{t+1}=1|S_t)=0.5$, Figure \ref{mrp}c shows the likelihood of observing divergence as a function of the number of samples used to estimate the empirical distribution $d$, assuming a perfect model and unbiased data-dependent estimates of $d$.

\section{Experiment details: Scalability of planning}

The layout of maze used in these experiments is shown in the main text. The agent can see a $5\times5$ portion of the maze, centered in its current location, where walls are encoded with 1s, and free cells as 0s. The agent can choose among 4 actions (up, down, left, right) that result in deterministic transitions to the adjacent cell, as long as such cell is empty; if the cell is a wall, the action has no effect. 

Both the forward Dyna agent and the replay-based Q-learning agent used a multi-layer perceptron (with two fully connected hidden layers, of size 20, and ReLU activations throughout) to approximate Q-values. The final output layer had no activation, and had only 4 nodes, one per action. The forward Dyna agent used separate networks with the same hidden layers to model state transitions, rewards and terminations; the output layers of these had $25$, $1$, and $1$ outputs. Both agents use a replay with a capacity of $10000$ transitions; the Q-networks are updated  with double Q-learning, on mini-batches of size 32; updates are rescaled by TensorFlow's implementation of the Adam optimizer, using a learning rate of $1e-3$. In the replay-based agent the update is computed using only the real data from the transition $s_{t-1}, a_{t-1}, r_t, \gamma_t, s_t$; in forward dyna the fictional transition $s_{t-1}, a_{t-1}, m_R(s_{t-1}, a_{t-1}), m_T(s_{t-1}, a_{t-1}), m_S(s_{t-1}, a_{t-1})$ is used instead, where $m_R, m_T, m_S$ are the outputs of the three neural networks used to parameterize the model.

\section{Experiment details: Benefits of Planning}

For these experiments we run on the four-rooms environment shown in the text. At the beginning of each episode, the agent's starting position is randomized and the goal position is held fixed. The dynamics are deterministic, with four actions that move the agent in the four cardinal directions, and a no-op action. The state is fully-observed, and we use a tabular (state-index) representation for these experiments. In both experiments we learn an exact Bayesian tabular model. We also learn a tabular value function in tandem using one-step (tabular) Q-learning.

\begin{figure*}[t]
    \centering
    \begin{tabular}{ccc}
    (a) & \hspace{0.5cm}(b) & \hspace{0.5cm}(c) \\
    \raisebox{0.4\height}{\includegraphics[width=0.26\linewidth]{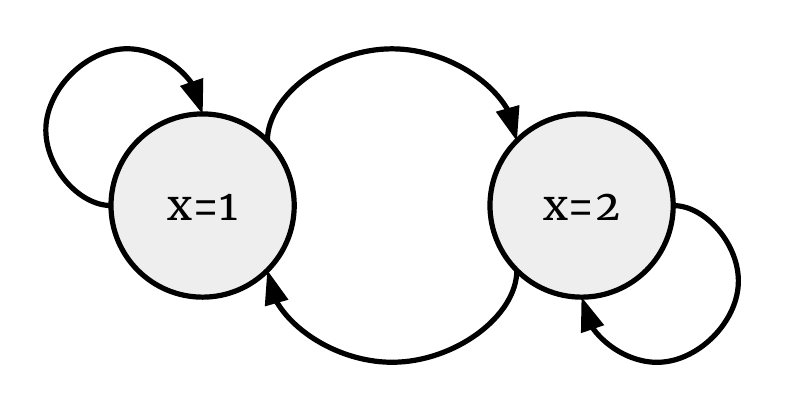}}
    &
    \includegraphics[width=0.32\linewidth]{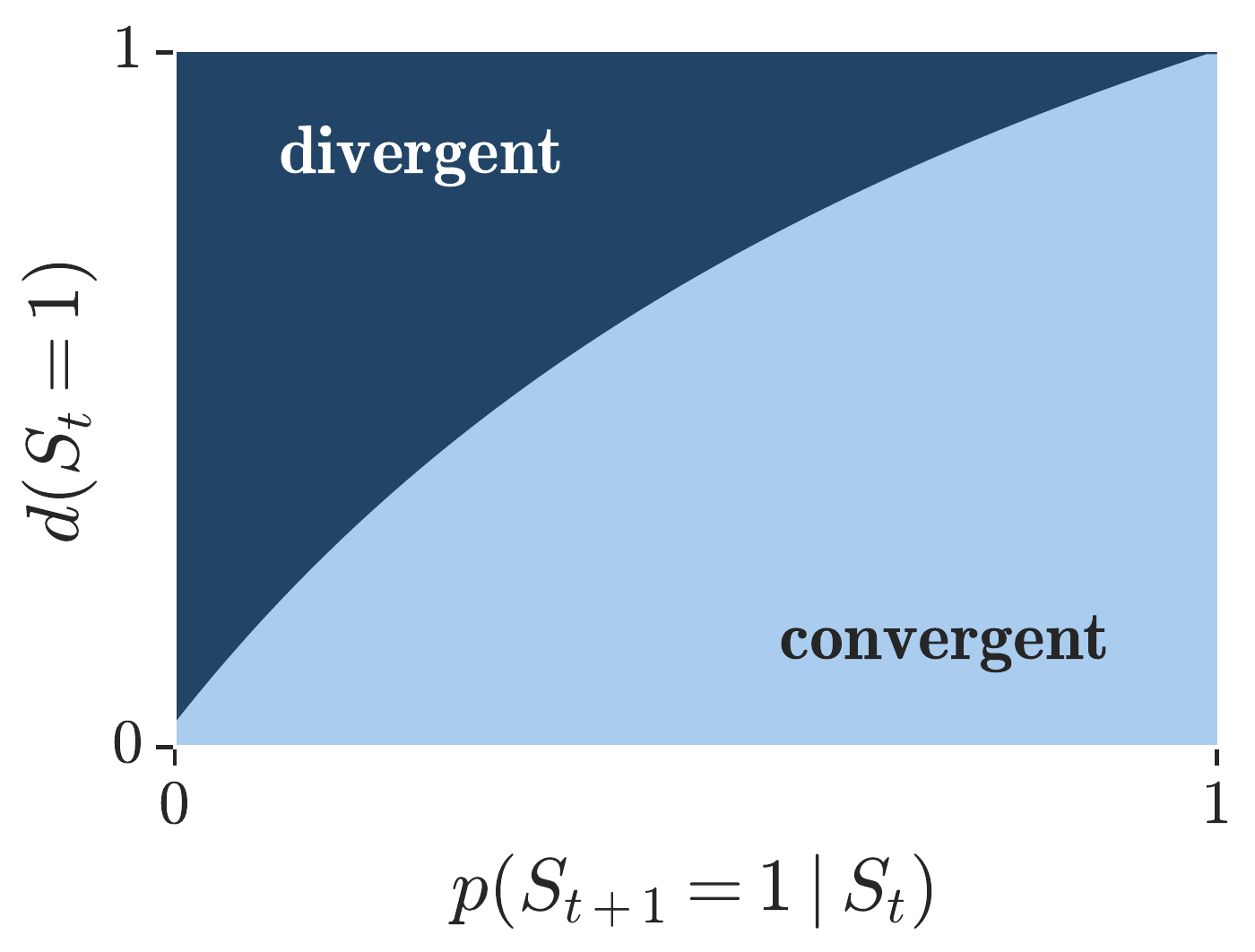}
    &
    \includegraphics[width=0.32\linewidth]{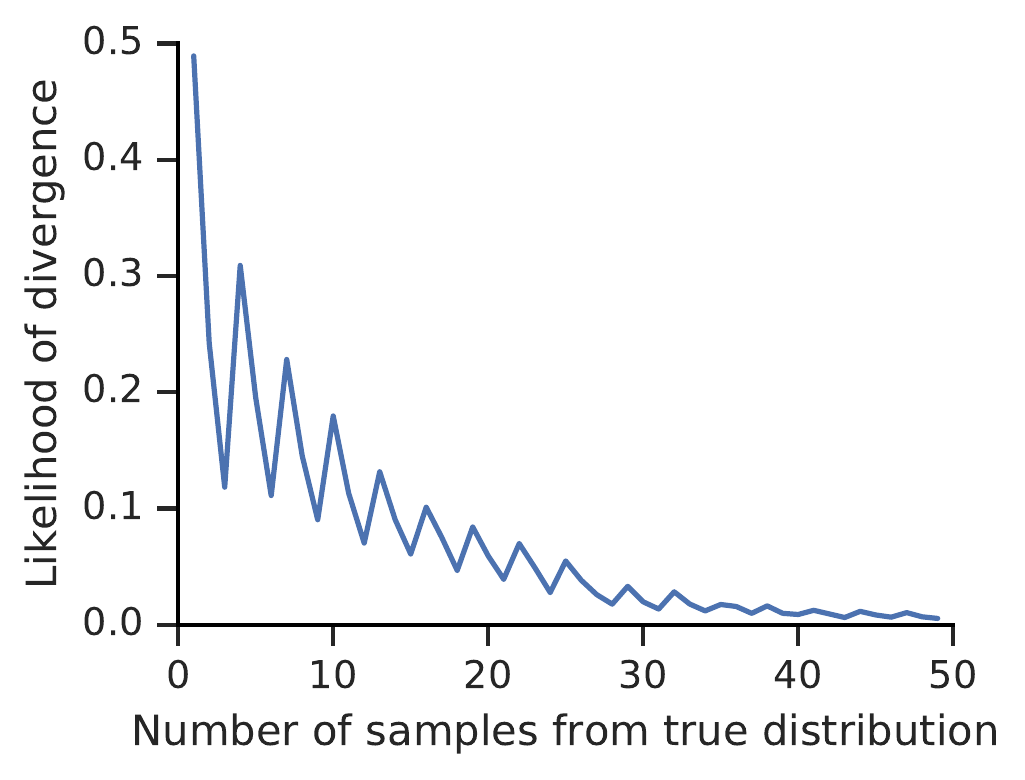}
    \end{tabular}
    \caption{
    (a) A simple Markov reward process from \citet{Sutton:2016}.
    (b) Observed divergence for different sampling distributions $d$ and transition probabilities $p$.
    (c) Assuming a perfect model and an unbiased data-dependent estimate of $d$ sampled from an instantiation of the environment with $p(S_{t+1}=1|S_t)=0.5$, we plot the likelihood of observing divergence as a function of the number of samples used to estimate the empirical distribution $d$. \label{mrp} }
    \includegraphics[width=\linewidth]{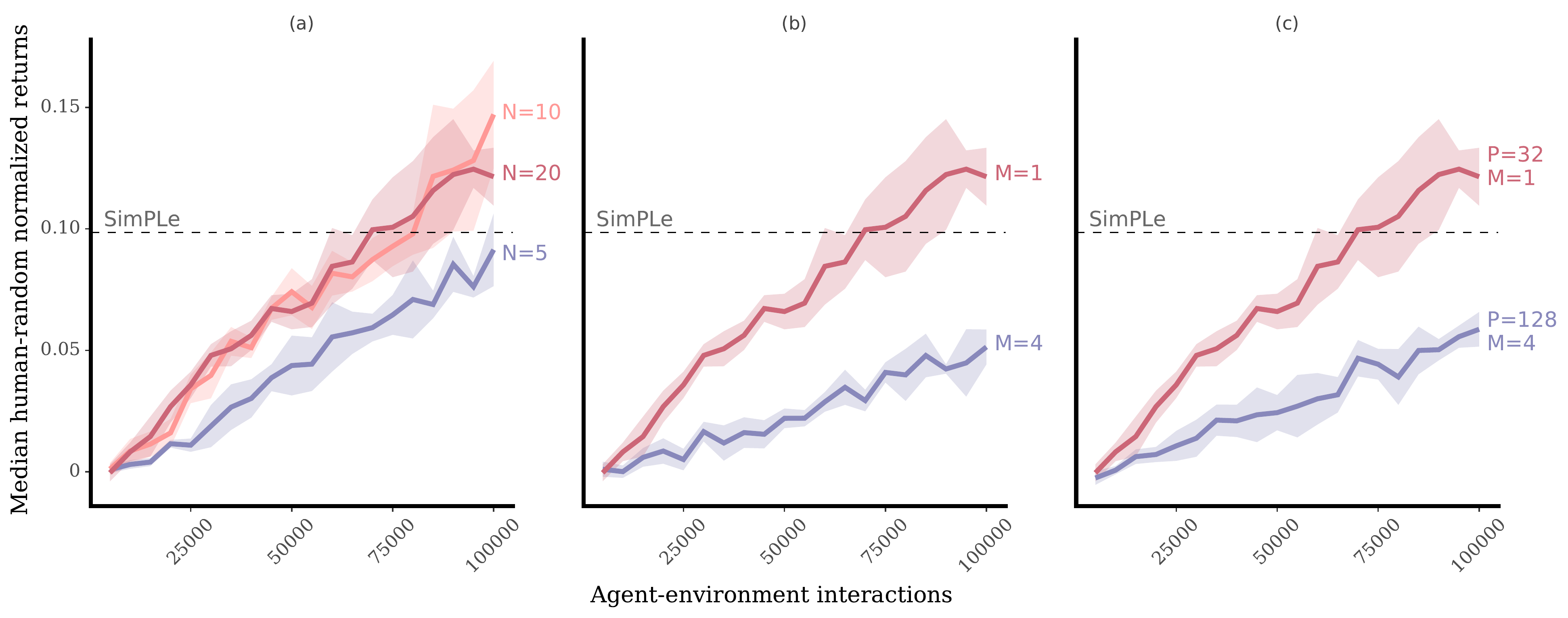}
    \caption{\textbf{Left}: an ablation experiment where we investigate the effect of various settings for the length of the multi-step bootstrapped targets. \textbf{Center}: an ablation experiment where we compare our variant of Rainbow to performing updates every 4 steps as in the canonical Rainbow DQN. \textbf{Center} comparing our data efficient Rainbow DQN with $M=1, P=32$ to a different Rainbow DQN which achieves the same $4 \times$ increase in the number of transitions sampled from replay, by increasing the batch size instead ($M=4, P=128$).  \label{ablations}}
\end{figure*}

\section{Additional results on Atari}

In Figure \ref{ablations}a and \ref{ablations}b we show the results of ablation experiments performed to isolate the effect of increasing the bootstrapping parameter and the effect of increasing the frequency of updates.
In Figure \ref{ablations}a we show the effect of varying the bootstrapping parameter $N \in [5,10,20]$, while keeping the update frequency fixed ($M=1$). Consistently with our expectations a bootstrapping length of $N=5$ resulted in much worse performance, although our such variant of Rainbow DQN still achieved results comparable to those of SimPLe. Both $N=10$ and $N=20$ resulted in good performance, with the difference between the two not found to be statistically significant (under a Welch's test applied to the 5 replicas of each hyper-parameter evaluation, with significance level of 0.1). 
In Figure \ref{ablations}b we show the effect of varying the frequency of the updates $M \in [1, 4]$, while keeping the number of steps before bootstrapping fixed ($N=20$); The agent which performs updates on each step performed much better, and the gap in performance was larger then the gap observed when varying the bootstrap parameter $N$.
Finally, in Figure \ref{ablations}c we report an additional experiment where we compare our variant of Rainbow DQN with $M=1, P=32$ to a different Rainbow DQN which achieves the same $4 \times$ increase in the number of transitions sampled from replay, by increasing the batch size instead ($M=4, P=128$). The performance was significantly lower.

\section{Table of results}
In Table \ref{table1} we report, for each of the 26 Atari game used by Kaiser et al. [2019] in their experiments, the mean episode return, at the end of training, of both SimPLe and our data efficient variant of Rainbow. On each score we mark in bold the best performing among the two agents. We also report the reference human and random scores that were used to normalize the scores in all learning curves. 

\vspace{10pt}
{
\renewcommand{\arraystretch}{1.5}
\begin{table}[h!]
\centering
\begin{tabular}{lrrrr}
\toprule
            Game &    Human &   Random &   SimPLe &  Rainbow \\
\midrule
           alien &   7127.7 &    227.8 &    405.2 &    \textbf{739.9} \\
          amidar &   1719.5 &      5.8 &     88.0 &    \textbf{188.6} \\
         assault &    742.0 &    222.4 &    369.3 &    \textbf{431.2} \\
         asterix &   8503.3 &    210.0 &   \textbf{1089.5} &    470.8 \\
      bank\_heist &    753.1 &     14.2 &      8.2 &     \textbf{51.0} \\
     battle\_zone &  37187.5 &   2360.0 &   5184.4 &  \textbf{10124.6} \\
          boxing &     12.1 &      0.1 &    \textbf{9.1} &      0.2 \\
        breakout &     30.5 &      1.7 &    \textbf{12.7} &      1.9 \\
 chopper\_command &   7387.8 &    811.0 &   \textbf{1246.9} &    861.8 \\
   crazy\_climber &  35829.4 &  10780.5 &  \textbf{39827.8} &  16185.3 \\
    demon\_attack &   1971.0 &    152.1 &    169.5 &    \textbf{508.0} \\
         freeway &     29.6 &      0.0 &     20.3 &     \textbf{27.9} \\
       frostbite &   4334.7 &     65.2 &    254.7 &    \textbf{866.8} \\
          gopher &   2412.5 &    257.6 &    \textbf{771.0} &    349.5 \\
            hero &  30826.4 &   1027.0 &   1295.1 &   \textbf{6857.0} \\
       jamesbond &    302.8 &     29.0 &    125.3 &    \textbf{301.6} \\
        kangaroo &   3035.0 &     52.0 &    323.1 &    \textbf{779.3} \\
           krull &   2665.5 &   1598.0 &   \textbf{4539.9} &   2851.5 \\
  kung\_fu\_master &  22736.3 &    258.5 &  \textbf{17257.2} &  14346.1 \\
       ms\_pacman &   6951.6 &    307.3 &    762.8 &   \textbf{1204.1} \\
            pong &     14.6 &    -20.7 &      \textbf{5.2} &    -19.3 \\
     private\_eye &  69571.3 &     24.9 &     58.3 &    \textbf{97.8} \\
           qbert &  13455.0 &    163.9 &    559.8 &   \textbf{1152.9} \\
     road\_runner &   7845.0 &     11.5 &   5169.4 &   \textbf{9600.0} \\
        seaquest &  42054.7 &     68.4 &    \textbf{370.9} &    354.1 \\
       up\_n\_down &  11693.2 &    533.4 &   2152.6 &   \textbf{2877.4} \\
\bottomrule
\vspace{2pt}
\end{tabular}
\caption{Mean episode returns of Human, Random, SimPLe and Rainbow agents, on each of 26 Atari games. The Rainbow results are measured at the end of training and averaged across 5 seeds; the results for SimPLe are taken from Kaiser et al. [2019]. On each game we mark as bold the higher score among SimPLe and Rainbow.}
\label{table1}
\end{table}
}

\newpage

\section{Atari hyper-parameters}
In Table \ref{table2} we report, for completeness and ease of reproducibility, the hyper-parameter settings used by the canonical Rainbow DQN agent, as well as the hyper-parameters that differ in our data efficient variation.

Several parameters that are common between the canonical and data-efficient variants of the algorithm may have large effects on data efficiency. However, our main goal was to do a clean comparison, rather than to push for maximal performance, and therefore we have made no effort fine-tuning these.

\vspace{4pt}
{\renewcommand{\arraystretch}{1.1}
\begin{table}[h]
\small
\centering
\begin{tabular}{lrr}
\toprule
Hyper-parameter & \multicolumn{2}{r}{setting (for both variations)} \\
\midrule
Grey-scaling && True \\
Observation down-sampling && (84, 84) \\
Frames stacked && 4 \\
Action repetitions && 4 \\
Reward clipping && [-1, 1] \\
Terminal on loss of life && True \\
Max frames per episode && 108K \\
Update & \multicolumn{2}{r}{Distributional Double Q} \\
Target network update period${}^{*}$ & \multicolumn{2}{r}{every 2000 updates} \\
Support of Q-distribution && 51 bins \\
Discount factor && 0.99 \\
Minibatch size && 32 \\
Optimizer && Adam \\
Optimizer: first moment decay && 0.9 \\
Optimizer: second moment decay && 0.999 \\
Optimizer: $\epsilon$ && $0.00015$ \\
Max gradient norm && 10 \\
Priority exponent && 0.5 \\
Priority correction${}^{**}$ && 0.4 $\rightarrow$ 1\\
Hardware && CPU \\
Noisy nets parameter && 0.1 \\
\midrule
Hyper-parameter & canonical & data-efficient\\
\midrule
Training frames & 200,000,000 & 400,000 \\
Min replay size for sampling & 20,000 & 1600 \\
Memory size & 1,000,000 steps & unbounded \\
Replay period every & 4 steps & 1 steps \\
Multi-step return length & 3 & 20 \\
Q network: channels & 32, 64, 64 & 32, 64 \\
Q network: filter size & $8\times8, 4\times4, 3\times3$ & 5 × 5, 5 × 5 \\
Q network: stride & 4, 2, 1 & 5, 5 \\
Q network: hidden units & 512 & 256 \\
Optimizer: learning rate & 0.0000625 & 0.0001 \\
\bottomrule
\multicolumn{3}{p{12cm}}{\footnotesize{${}^{*}$ The target network update period depends on the number of updates (not frames). This means that this update is more frequent in the data-efficient variant, in terms of frames.}}\\
\multicolumn{3}{p{12cm}}{\footnotesize{${}^{**}$ The priority correction linearly annealed from 0.4 to 1 during training: $\text{exponent} = (1 - \eta) \times 0.4 + \eta \times 1.0$, where $\eta = \text{current\_step}/\text{max\_steps}$. For the canonical variant, $\text{max\_step} = 50\text{M}$, for the data-efficient variant $\text{max\_step} = 100\text{K}$}}\\
\end{tabular}
\vspace{2pt}
\caption{The hyper-parameters used by the canonical and the data-efficient variant of the Rainbow DQN agent.}
\label{table2}
\end{table}
}

\end{document}